\pdfoutput=1
\documentclass[letterpaper, 10 pt, conference]{ieeeconf}
\IEEEoverridecommandlockouts
\usepackage{amsmath,amssymb}
\usepackage{graphicx}
\usepackage{tikz}
\usetikzlibrary{arrows.meta,positioning,calc,decorations.pathreplacing}
\usepackage{booktabs}
\usepackage{url}
\usepackage{amsthm}
\newtheorem{proposition}{Proposition}
\newtheorem{lemma}{Lemma}
\newtheorem{corollary}{Corollary}
\newtheorem{assumption}{Assumption}

\makeatletter
\long\def\@makecaption#1#2{%
\ifx\@captype\@IEEEtablestring%
{\centering\footnotesize #1\par\nobreak\vskip 1.5pt\scriptsize\scshape #2\par}%
\@IEEEtablecaptionsepspace%
\else
\@IEEEfigurecaptionsepspace%
\setbox\@tempboxa\hbox{\scriptsize #1.~~ #2}%
\ifdim \wd\@tempboxa >\hsize%
\setbox\@tempboxa\hbox{\scriptsize #1.~~ }%
\parbox[t]{\hsize}{\scriptsize \noindent\unhbox\@tempboxa#2}%
\else%
\ifcenterfigcaptions \hbox to\hsize{\scriptsize\hfil\box\@tempboxa\hfil}%
\else \hbox to\hsize{\scriptsize\box\@tempboxa\hfil}%
\fi\fi\fi}
\makeatother
\usepackage{etoolbox}
\makeatletter
\patchcmd{\@IEEEdynamictitlevspace}{\def\@NORMtitlevspace{1\baselineskip}}{\def\@NORMtitlevspace{0.5\baselineskip}}{}{\GenericWarning{}{title spacer patch 1 failed}}
\patchcmd{\@IEEEdynamictitlevspace}{\def\@MINtitlevspace{0.75\baselineskip}}{\def\@MINtitlevspace{0.25\baselineskip}}{}{\GenericWarning{}{title spacer patch 2 failed}}
\patchcmd{\@maketitle}{\vskip0.25in{\LARGE\@title\par}\vskip1.0em\par}{\vskip0.12in{\LARGE\@title\par}\vskip0.5em\par}{}{\GenericWarning{}{title block patch failed}}
\def\@IEEEdynamictitlevspace{\vskip 0.4\baselineskip}
\def\section{\@startsection{section}{1}{\z@}{1.0ex plus 0.6ex minus 0.3ex}{0.4ex plus 0.3ex minus 0ex}{\normalfont\normalsize\centering\scshape}}
\def\subsection{\@startsection{subsection}{2}{\z@}{1.0ex plus 0.6ex minus 0.3ex}{0.3ex plus 0.3ex minus 0ex}{\normalfont\normalsize\itshape}}
\makeatother

\title{\LARGE \bf KEPT: Conformal Keepouts Kept in Planning for Enclosed-Space Navigation in Degraded Sensing}
\author{Eric~Minwoo~Kim and Jong-Kook~Kim\thanks{The authors are with the School of Electrical Engineering, Korea University, Seoul, Republic of Korea (e-mail: eric2022@korea.ac.kr; jongkook@korea.ac.kr).}}

\begin{document}

\maketitle
\thispagestyle{empty}
\pagestyle{empty}

\begin{abstract}
Autonomous vehicles in enclosed spaces such as tunnels and mines rely on range maps that smoke, dust and fog degrade.
Conformal prediction calibrates a keepout around the mapped walls that covers the true unsafe set at a chosen probability.
In narrow passages this margin and the planner's buffer close the route, and existing remedies recover reach by relaxing the coverage level.
This paper conditions the conformal margin on a label-free severity estimate per viewing direction and sweep, holding per-class coverage near its target as degradation varies. %
A planning and control layer enforces one braking-distance condition on the keepout at plan time and at every command, takes the measured tracking residual as its only margin and never alters the keepout.
The keepout is validated on ground-robot smoke recordings and the closed loop on simulated quadrotor flights in a foggy tunnel.
The proposed system substantially reduces entries into the calibrated keepout and reaches numerically more goals than the deployment pattern in both calibration regimes. The per-class keepout holds its coverage near the target on held-out recordings. %
\end{abstract}

\section{Introduction}\label{sec:intro}

Autonomous vehicles in tunnels, mines and collapsed structures navigate narrow passages on range maps that smoke, dust and fog degrade~\cite{tranzatto2022cerberus,zhao2024subtmrs,rasshofer2011weather}. %
Conformal prediction calibrates a keepout around the mapped walls that covers each point of the true unsafe set at a chosen probability without a noise model~\cite{vovk2005alrw,angelopoulos2023gentle}.
This coverage concerns the map, whereas staying out of the keepout and reaching the goal depend on planning and control.

In the usual deployment the planner treats the conformal set as an obstacle behind its own clearance buffer and collision check~\cite{lindemann2023safe,dixit2024pwc,sundarsingh2026semantic,zhou2019fastplanner,zhou2021ego}, which in narrow passages can close routes the keepout leaves open and budgets no braking distance to a keepout growing as degradation reveals walls late.
Existing remedies recover reach by lowering the confidence level or softening the constraint~\cite{sundarsingh2026semantic,stamouli2024recursive,shin2026fcpmpc,guo2026coconav}, while barrier filters and plan-and-command checks that keep the set fixed act at the command alone or on other agents' motion~\cite{yang2023safe,dong2026risk,guo2026coconav,suganda2026cert}. %

KEPT (keepout enforcement in planning and tracking, Fig.~\ref{fig:overview}) first constructs a severity-conditioned keepout whose avoidance implies clearance at cell resolution on the covered-error event. %
Its planning layer then enforces the same keepout by a common braking-distance condition at plan time and at every command, without relaxing the certificate. %
Its components, conformal calibration, barrier functions and the planner, are established~\cite{vovk2005alrw,ames2019cbf,zhou2019fastplanner}, and the contributions lie in how the margin is conditioned, where the condition acts and what is measured.
\begin{enumerate}
  \item A conformal keepout calibrated offline per severity class, its margin picked at run time per viewing direction and sweep from a label-free severity estimate, \emph{run-time margin selection}, direction-selected and time-varying at an unchanged per-class guarantee. %
  \item KEPT's planning layer, which plans inside the certified free space without a second clearance buffer and holds trajectory acceptance and command filtering to consistent braking-distance constraints on the same keepout, never relaxed, the filtered reference feeding the next plan. %
  The vehicle thereby enters the keepout in markedly fewer flights and reaches numerically more goals, without relaxing the certificate.
  \item A closed-loop quadrotor simulation measuring executed keepout entries, reach and coverage on the same cells against the conformal deployment pattern, a plan-time law alone and the command filter alone. %
\end{enumerate}
\begin{figure}[t]\centering
\begin{tikzpicture}[font=\scriptsize, >=Latex, node distance=1.4mm,
  blk/.style={draw=black!55, rounded corners=1.5pt, align=center, inner sep=2.6pt, text width=0.88\columnwidth, minimum height=5mm},
  inner/.style={draw=black!55, rounded corners=1pt, fill=white, align=center, inner sep=2pt, text width=0.36\columnwidth},
  sub/.style={font=\tiny}]
\node[blk, fill=black!6] (perc) {\textbf{Degraded LiDAR perception}};
\node[blk, fill=violet!8, below=of perc] (cert) {\textbf{Conformal certificate construction}\\[-0.2ex]
  {\tiny Offline calibration per class $\cdot$ run-time margin selection per wedge $\theta$ and sweep}\\[0.1ex]
  \mbox{$\lambda_b = z_{(k_b)},\quad k_b = \lceil (n_b+1)(1-\alpha)\rceil$}};
\node[blk, fill=blue!8, below=of cert] (keep) {\textbf{Time-varying, direction-selected certified keepout}\\[0.1ex]
  \mbox{$K_\alpha(t) = \{\,x : m_t(x) \le r_{\mathrm{safe}} + \lambda_{b(\theta_x,\,t)}\,\}$}\\[-0.1ex]
  {\tiny $\theta_x$ the wedge that last observed $x$, $b(\theta_x, t)$ its class held at $t$ $\cdot$ one keepout, never relaxed}};
\node[blk, fill=orange!10, below=of keep, minimum height=19mm] (cpmc) {};
\node[font=\scriptsize\bfseries, anchor=north] at ([yshift=-1.5pt]cpmc.north) (cpmct) {CPMC planning layer};
\node[inner, anchor=north east] at ([xshift=-2.2mm,yshift=-1.4mm]cpmct.south) (pt) {\textbf{Plan-time constraint}\\\mbox{$H(p,v,t) \ge 0$}\\{\tiny Accept and re-time}};
\node[inner, anchor=north west] at ([xshift=2.2mm,yshift=-1.4mm]cpmct.south) (cf) {\textbf{Command-level filter}\\\mbox{$H^{F} \ge 0$}\\{\tiny One-step enforcement}};
\draw[->] (pt) -- (cf);
\node[sub, anchor=south] at ([yshift=1.5pt]cpmc.south) {Consistent constraints in planning and execution};
\node[blk, fill=green!10, below=of cpmc] (nav) {\textbf{Closed-loop UAV navigation}};
\draw[->] (perc) -- (cert); \draw[->] (cert) -- (keep); \draw[->] (keep) -- (cpmc); \draw[->] (cpmc) -- (nav);
\draw[->, dashed, black!60] (nav.west) -- ++(-3.2mm,0) |- (perc.west) node[pos=0.25, rotate=90, anchor=south, font=\tiny, black!60, yshift=0.6mm] {Feedback};
\end{tikzpicture}
\caption{Overview of KEPT. The dashed arrow is the feedback path.}\label{fig:overview} %
\end{figure}

\section{Related Work}\label{sec:related}

Conformal prediction entered robot navigation as a bound on the error of a predicted or sensed quantity before a planner consumes it.
Lindemann et al. constrain a model predictive controller to avoid conformal regions around predicted agent trajectories, the robust-constraint argument that Lemma~\ref{lem:contact} adapts~\cite{lindemann2023safe}.
Dixit et al. calibrate an occupancy inflation from one worst-case score per calibration environment, the smallest inflation covering the truth at every state of a fixed sampled set, and take one conformal threshold from all environments' scores~\cite{dixit2024pwc}. %
Conformal regions also enter mission planning as label sets of a semantic map~\cite{sundarsingh2026semantic}, partially observable planning as an on-line adapted quantile~\cite{sheng2024pomdp,gibbs2021aci,dixit2023adaptive} and a safety filter for a learned controller~\cite{strawn2023filter}. %
In each, the region is an input to an unchanged planner or controller, and the braking distance to a region that moves as the map is revised is not part of the planning condition.
When the inflated region closes a route, feasibility is recovered by falling back to the largest confidence level that admits a path~\cite{sundarsingh2026semantic}, relaxing constraints as regions arrive~\cite{stamouli2024recursive}, softening the envelope into a penalty~\cite{shin2026fcpmpc} or penalising the bound in the planner and verifying the rollout afterwards~\cite{guo2026coconav}, and the set respected is then not the calibrated one. %
Conformal regions have also been coupled to control barrier functions that filter the command against a static set in simulation and do not act at plan time~\cite{yang2023safe,dong2026risk}, Mondrian strata on an observable risk indicator having been used there for relative-state error in formations~\cite{suganda2026formation}. %

Trajectories are also certified without a conformal map certificate, by a worst-case tracking bound~\cite{chen2021fastrack}, committed trajectories whose safe part stops inside known free space or keeps a safe set invariant~\cite{tordesillas2021faster,agrawal2024gatekeeper} or a one-step barrier filter on adaptive conformal radii of moving obstacles~\cite{suganda2026cert}, and none reports reach beside entries into a fixed statistical keepout on the same cells. %

KEPT enforces its condition at both points, while CoCoNav uses its bound softly in the planner and CERT-Replan in the filter and monitor alone. %
The plan-plus-verify structure receives a statistical keepout instead of a worst-case or agent-adaptive set.
Its only margin is the measured tracking residual, and the keepout margin is selected at run time per viewing direction from sensing severity, which none of the methods above models. %
KEPT calibrates per cell rather than per environment, a choice distinct from severity conditioning.
The cell-level guarantee does not establish the planner-independent episode-level coverage that Dixit et al. obtain on their stated assumptions~\cite{dixit2024pwc}. %
 
\section{Certified Keepout and Its Safety Meaning}\label{sec:problem}

\subsection{Setting}
An autonomous vehicle moves in an enclosed workspace $W \subset \mathbb{R}^3$ having true obstacle set $O$, $\varphi(x) = \mathrm{dist}(x, O)$ is the distance to the nearest obstacle, the vehicle is a ball of radius $r_{\mathrm{safe}}$ around its centre and the true unsafe set of centre positions is $U = \{x \in W : \varphi(x) \le r_{\mathrm{safe}}\}$.
The workspace is discretised into cubic cells of edge $\Delta$, all fields are evaluated at cell centres and a cell is unsafe when its centre lies in $U$.
The vehicle never observes $\varphi$ and receives an estimate $m(x)$ of distance to the nearest surface, degraded by obscurant, dropout and grazing incidence. The error $m(x) - \varphi(x)$ is positive when a wall appears farther than it is, the safety-relevant case. %
A keepout is a set $K \subset W$ of centre positions the planner treats as occupied. This section asks how large $K$ must be to contain $U$ at a controlled rate given only $m$ and a finite calibration record, and what clearance staying outside $K$ then certifies.

\subsection{Severity-conditioned conformal keepout}
\begin{figure*}[t]\centering\vspace*{2pt}%
\begin{minipage}[c]{0.655\textwidth}\centering
{\scriptsize (a) margins along the route, schematic}\\[1pt]
\includegraphics[width=\linewidth]{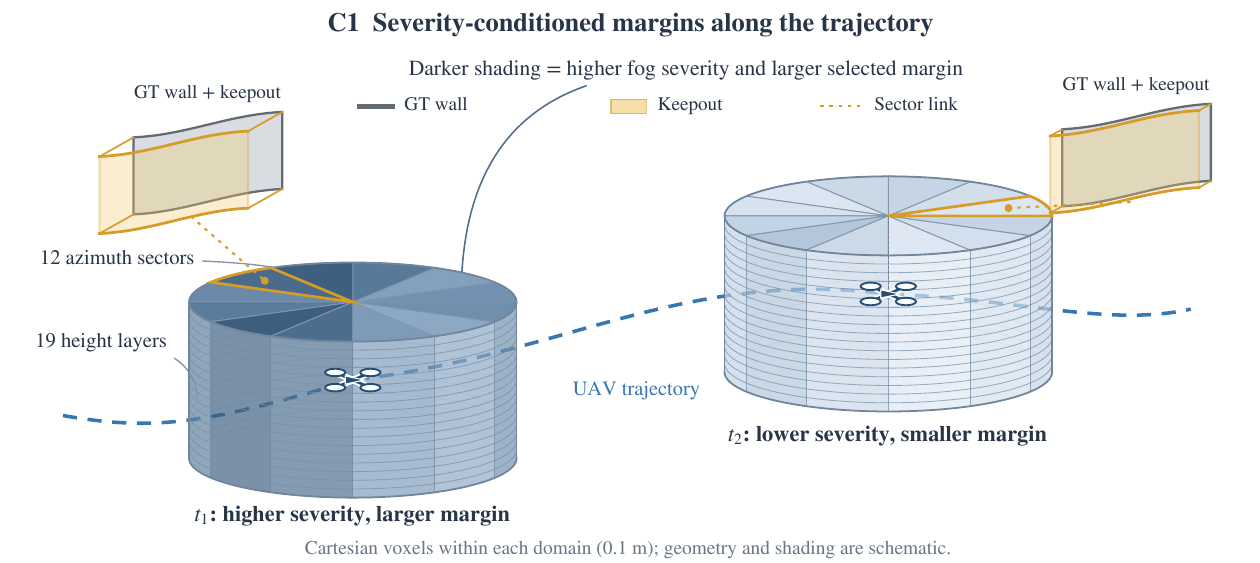}
\end{minipage}\hfill
\begin{minipage}[c]{0.33\textwidth}\centering
\includegraphics[width=\linewidth]{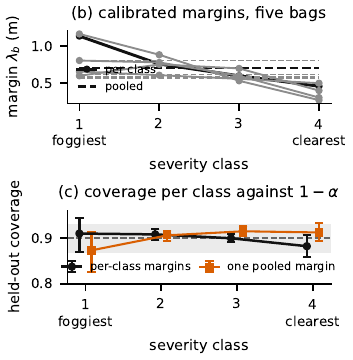}
\end{minipage} %
\caption{The severity-conditioned certificate. (a) Margin selection along the route, schematic. (b) Calibrated $\lambda_b$ per class, five ground-robot bags, bag 1 black, pooled margins dashed. (c) Held-out coverage of the certified event per class, band $0.90 \pm 0.03$ shaded.}\label{fig:c1} %
\end{figure*}
Fog is rarely uniform along a route, and one margin calibrated on all cells is both too small in its degraded part, where coverage falls below the target, and too large in its clear part, where it closes passages the true walls leave open (Table~\ref{tab:cert}). The certificate therefore conditions the margin on a label-free severity field and selects it per viewing direction and sweep, large where the view is degraded and small where it is clear (Fig.~\ref{fig:c1}). %
For each LiDAR sweep the field $\hat\beta(\theta)$ is the return fraction in a $30^\circ$ azimuth wedge spanning all rings divided by the return fraction the current map predicts for that wedge. %
Its range is cut into four severity classes $b \in \{1, \dots, 4\}$ at the fixed edges $0.0031$, $0.1806$ and $0.7122$, every map cell inherits the class of the sweep that last observed it, and a class whose calibration count falls below a fixed floor is merged into its smaller neighbour before calibration.
A seen cell's stratum is its class, and an unseen cell forms a separate stratum whose score is its distance to the nearest observed occupied cell, thresholded at the same rank quantile of that distance on the unsafe unseen cells of each class and carrying no separate radius term.
Picking that class's margin for each wedge and sweep is the run-time margin selection of Section~\ref{sec:intro}, and the calibration is never repeated on line.
Calibration follows split conformal prediction in its Mondrian form, the score of calibration cell $i$ in stratum $b$ is the error $z_i = m(x_i) - \varphi(x_i)$, and for a target miscoverage $\alpha = 0.10$ and $n_b$ calibration cells in stratum $b$ the margin is the order statistic
\begin{equation}\label{eq:lambda}
\lambda_b = z_{(k_b)}, \qquad k_b = \lceil (n_b + 1)(1 - \alpha) \rceil ,
\end{equation}
and $\lambda_b = +\infty$ when $k_b$ exceeds $n_b$.
The certified keepout is
\begin{equation}\label{eq:keepout}
K_\alpha = \{\, x \in W : m(x) \le r_{\mathrm{safe}} + \lambda_{b(x)} \,\},
\end{equation}
a rule on the estimate alone, the vehicle radius being part of the keepout by construction.
\begin{assumption}[exchangeability within a stratum]\label{as:exch}
For every stratum $b$, the score of an unsafe cell of stratum $b$ met at deployment and the scores of the $n_b$ calibration cells of that stratum, which are the unsafe cells of the calibration recordings, are exchangeable.
\end{assumption}
\begin{assumption}[strata fixed before the truth]\label{as:strata}
The stratum of a cell is a function of observables only, the class edges having been fixed on recordings disjoint from the calibration recordings, and the unseen box is decided by sensor visibility.
\end{assumption}
Given Assumptions~\ref{as:exch} and~\ref{as:strata}, the rank argument of Mondrian conformal prediction~\cite{vovk2005alrw,angelopoulos2023gentle} applied stratum by stratum, a tied score counting as covered, gives $z(c) \le \lambda_b$ at probability at least $k_b/(n_b + 1) \ge 1 - \alpha$ for an unsafe cell $c$ of stratum $b$ met at deployment, and because an unsafe cell has $\varphi(c) \le r_{\mathrm{safe}}$, $z(c) \le \lambda_b$ gives $m(c) \le r_{\mathrm{safe}} + \lambda_b$, membership in $K_\alpha$ by~\eqref{eq:keepout}, that is
\begin{equation}\label{eq:coverage}
\Pr\big(c \in K_\alpha \mid c \text{ unsafe},\, b(c) = b\big) \ge \Pr\big(z(c) \le \lambda_b\big) \ge 1 - \alpha . %
\end{equation}
\begin{proposition}[conditioning, nested keepouts and admissible motion]\label{prop:conditioning}
Let $\lambda_{\mathrm{pool}}$ be the margin calibrated at the same $\alpha$ on the union of seen strata. %
\textup{(a)} $\lambda_{\mathrm{pool}}$ lies between the smallest and largest per-stratum margins up to rank rounding in~\eqref{eq:lambda}. Within each stratum having $\lambda_b \le \lambda_{\mathrm{pool}}$, the per-stratum keepout is a subset of the pooled one, and in each stratum having $\lambda_b > \lambda_{\mathrm{pool}}$ the reverse inclusion holds. %
\textup{(b)} If two keepouts satisfy $K_1(t) \subseteq K_2(t)$ at every time, then for the same dynamics, limits, start and goal every trajectory admissible for $K_2$ is admissible for $K_1$, the attainable objective of an ideal planner cannot worsen, and the bottleneck clearance between start and goal, the maximum on admissible start-to-goal paths of the smallest distance to the keepout along the path, cannot shrink. %
\end{proposition}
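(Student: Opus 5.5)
For part (a) the plan is to treat the pooled margin as an order statistic of the merged score sample and compare ranks. Let $F_b$ be the empirical distribution of the $n_b$ scores in seen stratum $b$, and let $N=\sum_b n_b$. The pooled empirical distribution is then the mixture $F_{\mathrm{pool}}=\sum_b (n_b/N)F_b$.

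Write $\lambda_{\min}=\min_b\lambda_b$ and $\lambda_{\max}=\max_b\lambda_b$. By \eqref{eq:lambda}, each $F_b(\lambda_b)\ge k_b/n_b\ge(1-\alpha)(n_b+1)/n_b$. Summing with the mixture weights shows that $F_{\mathrm{pool}}(\lambda_{\max})$ is at least about $1-\alpha$. Hence the pooled rank $k_{\mathrm{pool}}=\lceil (N+1)(1-\alpha)\rceil$ is reached at or before $\lambda_{\max}$, up to the ceiling discrepancy between $\sum_b k_b$ and $k_{\mathrm{pool}}$.

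Symmetrically, each stratum has fewer than $k_b$ scores strictly below $\lambda_b$. Summing gives fewer than $\sum_b k_b$ pooled scores strictly below $\lambda_{\min}$. Since $\sum_b k_b$ differs from $k_{\mathrm{pool}}$ by at most the number of strata (rank rounding), $\lambda_{\mathrm{pool}}\ge\lambda_{\min}$ up to that rounding. I would state the rounding slack explicitly as at most $|\{b\}|$ ranks. That is the ``up to rank rounding'' qualifier, and it is the only delicate step in (a). I would check it first, including the case $\lambda_b=+\infty$, where the inequality $\lambda_{\mathrm{pool}}\le\lambda_{\max}$ is trivial.

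The inclusions in (a) then follow pointwise from \eqref{eq:keepout}. Take a cell $x$ with $b(x)=b$. If $\lambda_b\le\lambda_{\mathrm{pool}}$, then $m(x)\le r_{\mathrm{safe}}+\lambda_b$ implies $m(x)\le r_{\mathrm{safe}}+\lambda_{\mathrm{pool}}$. The reverse inclusion holds when $\lambda_b>\lambda_{\mathrm{pool}}$. These are monotone sublevel-set inclusions, restricted to the cells of that stratum.

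For part (b) I would define admissibility of a trajectory for $K$ as satisfying the dynamics, the limits, the start and goal conditions, and $p(t)\notin K(t)$ for all $t$. The keepout enters only through this last constraint.

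If $K_1(t)\subseteq K_2(t)$, then $p(t)\notin K_2(t)$ implies $p(t)\notin K_1(t)$. So the admissible set for $K_2$ is contained in that for $K_1$. An ideal planner that optimises over a larger feasible set attains an objective value at least as good, whether infimum or supremum as appropriate, with an infeasible problem taken as value $\pm\infty$.

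For the bottleneck clearance, I would first note that $\mathrm{dist}(x,K_1(t))\ge\mathrm{dist}(x,K_2(t))$ because $K_1(t)$ is the smaller set. So each path's minimum clearance is no smaller under $K_1$. The maximum is taken over a larger family of admissible paths, so it cannot shrink.

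I expect (b) to be routine. The main obstacle is the careful rank bookkeeping in (a).
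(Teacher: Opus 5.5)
Your proposal is correct and follows the paper's own proof. For (a) you take the pooled empirical law as the $n_b$-weighted mixture, place its quantile between the component quantiles up to rank rounding, and read the inclusions off \eqref{eq:keepout} pointwise; for (b) you use monotonicity of the feasible set in the constraint set together with $\mathrm{dist}(x,K_1(t)) \ge \mathrm{dist}(x,K_2(t))$.

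One bookkeeping slip in (a): with $B$ seen strata, $\sum_b k_b - k_{\mathrm{pool}}$ is not bounded by $B$. Four strata of 18 cells at $\alpha = 0.10$ give $72$ against $66$. To fix it, keep the per-stratum count of at most $k_b - 1$ scores strictly below $\lambda_b$ when you sum. That leaves at most $\sum_b k_b - B < k_{\mathrm{pool}} + (B-1)(1-\alpha)$ pooled scores strictly below $\lambda_{\min}$, so the slack is under $B$ ranks. On the upper side, $\sum_b k_b \ge (N+B)(1-\alpha)$ and integrality give $\sum_b k_b \ge k_{\mathrm{pool}}$, so $\lambda_{\mathrm{pool}} \le \lambda_{\max}$ holds with no rounding at all.
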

\begin{proof}
(a) The pooled score distribution is the $n_b$-weighted mixture of the per-stratum ones, a mixture quantile lies between the smallest and the largest component quantile up to the rank rounding of~\eqref{eq:lambda}, and the inclusions follow from~\eqref{eq:keepout}.
(b) The admissible set is monotone decreasing in the constraint set, and the smallest distance to $K_1$ along any path is at least that to $K_2$.
\end{proof}
The hypothesis of (b) does not follow from (a), because a keepout that shrinks in benign strata and grows in severe ones is not nested and one added cell can close the only route, and what a finite-budget planner recovers is measured, the keepout share beside the coverage column of Section~\ref{sec:results}. %

\subsection{From compliance to clearance}
The implication below gives deterministic clearance at a certified cell conditional on the error bound. %
Let $X_t$ be the cell of the executed centre at sample $t$, and let the score and threshold of its stratum be $(s_t, \tau_b) = (z_t, \lambda_b)$ for a seen stratum and $(m_{U,t}, d^*_b)$, the distance and threshold of the unseen rule, for the unseen one.
\begin{assumption}[coverage at the executed query]\label{as:query}
For every stratum $b$ and sample $t$, $\Pr\big(s_t(X_t) > \tau_b \mid X_t \in U,\ b(X_t) = b\big) \le \alpha$.
\end{assumption}
\begin{lemma}[certified clearance and per-sample bound]\label{lem:contact}
Write the true clearance as $\varphi(x) = m_t(x) - z_t(x)$, where $m_t$ and $z_t$ are the estimate and the error of the map in force at sample $t$.
\textup{(i)} Let $x$ be a cell of a seen stratum $b$ at sample $t$. If $x \notin K_\alpha(t)$ and $z_t(x) \le \lambda_b$ then
\begin{equation}\label{eq:bridge}
\varphi(x) \;\ge\; m_t(x) - \lambda_b \;>\; r_{\mathrm{safe}},
\end{equation}
that is, $x$ is not an unsafe cell, and equivalently $\{x \in U\} \subseteq \{x \in K_\alpha(t)\} \cup \{z_t(x) > \lambda_b\}$, and for a cell of the unseen stratum the keepout rule of Section~\ref{sec:problem}-B is itself the covered event and the same inclusion holds by the definition of miscoverage.
\textup{(ii)} Let Assumption~\ref{as:query} hold. Then $\Pr(X_t \in U) \le \Pr(X_t \in K_\alpha(t)) + \alpha$ at that sample, and for a fixed set of $N$ scored samples the expected unsafe fraction is at most the expected entry fraction plus $\alpha$.
\end{lemma}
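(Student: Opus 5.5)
Part (i) is a direct unwinding of the definitions, and I will argue it pointwise. For a cell $x$ of a seen stratum $b$ at sample $t$, the time-varying keepout of Fig.~\ref{fig:overview} restates \eqref{eq:keepout} with the map in force. So $x\notin K_\alpha(t)$ means $m_t(x) > r_{\mathrm{safe}} + \lambda_b$. The decomposition $\varphi(x) = m_t(x) - z_t(x)$ is just the definition of the error $z_t = m_t - \varphi$. Together with $z_t(x)\le\lambda_b$ it gives $\varphi(x)\ge m_t(x)-\lambda_b$, and the keepout condition gives the strict inequality $m_t(x)-\lambda_b>r_{\mathrm{safe}}$. This is \eqref{eq:bridge}, and since $U=\{\varphi\le r_{\mathrm{safe}}\}$ it says $x\notin U$.

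The set inclusion $\{x\in U\}\subseteq\{x\in K_\alpha(t)\}\cup\{z_t(x)>\lambda_b\}$ is the contrapositive of the same implication. For the unseen stratum there is no radius term: the keepout rule is the thresholding of $m_U$ at $d^*_b$, so a cell is in the keepout exactly when its score satisfies $s_t\le\tau_b$ in the orientation of that rule. The complement of the keepout is then exactly the miscoverage event $\{s_t>\tau_b\}$, and the inclusion holds trivially. The one point I will be careful to state is the orientation convention, so that membership in $K_\alpha(t)$ coincides with $s_t\le\tau_b$ for the unseen rule.

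For (ii), I apply the unified inclusion $\{X_t\in U\}\subseteq\{X_t\in K_\alpha(t)\}\cup\{X_t\in U,\ s_t(X_t)>\tau_{b(X_t)}\}$ at the random executed cell. I keep the event $X_t\in U$ inside the second set, since it is already implied on the left. A union bound then gives
\[
\Pr(X_t\in U)\le \Pr(X_t\in K_\alpha(t)) + \Pr\bigl(X_t\in U,\ s_t>\tau_{b(X_t)}\bigr).
\]
I will bound the second term by conditioning on the stratum and using Assumption~\ref{as:query}. The second term equals $\sum_b \Pr(s_t>\tau_b\mid X_t\in U, b(X_t)=b)\Pr(X_t\in U, b(X_t)=b)$, which is at most $\alpha\Pr(X_t\in U)\le\alpha$. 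This argument also yields the slightly sharper $\Pr(X_t\in U)(1-\alpha)\le\Pr(X_t\in K_\alpha(t))$, which I may note in passing. Strata with zero probability are dropped from the sum.

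The final claim follows by linearity of expectation. For a fixed set of $N$ sample indices, the expected unsafe fraction is $\frac1N\sum_t\Pr(X_t\in U)$, and summing the per-sample bound gives at most the expected entry fraction plus $\alpha$. No independence across samples is needed, which is why I state the result only in expectation. The main obstacle is conceptual rather than technical. The marginal guarantee \eqref{eq:coverage} concerns a deployment cell exchangeable with the calibration cells, whereas $X_t$ is selected by the closed loop. Assumption~\ref{as:query} is precisely what bridges this gap, so the proof must invoke that assumption and not \eqref{eq:coverage} directly.
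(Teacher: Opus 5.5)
Your proposal is correct and follows the paper's proof. Part (i) unwinds the keepout rule~\eqref{eq:keepout} together with $\varphi = m_t - z_t$, and treats the unseen stratum as the miscoverage event, just as the paper does; the paper only adds a citation to the matching step of Lindemann et al. Part (ii) uses the same inclusion $\{X_t\in U\}\subseteq\{X_t\in K_\alpha(t)\}\cup F_t$, the union bound and linearity of expectation, and your sum over strata simply spells out how Assumption~\ref{as:query} gives $\Pr(F_t)\le\alpha$, while also correctly yielding the slightly sharper bound $(1-\alpha)\Pr(X_t\in U)\le\Pr(X_t\in K_\alpha(t))$.
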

\begin{proof}
(i) $x \notin K_\alpha(t)$ gives $m_t(x) > r_{\mathrm{safe}} + \lambda_b$ by~\eqref{eq:keepout} and $z_t(x) \le \lambda_b$ gives $\varphi(x) \ge m_t(x) - \lambda_b$, the step of Theorem~2 of Lindemann et al.~\cite{lindemann2023safe} for the constraint function $m_t(x) - r_{\mathrm{safe}}$ at Lipschitz constant $1$ and region radius $\lambda_b$, no independence between the two events being used.
(ii) For $F_t = \{X_t \in U,\ s_t(X_t) > \tau_b\}$, part (i) and the unseen rule give $\{X_t \in U\} \subseteq \{X_t \in K_\alpha(t)\} \cup F_t$, Assumption~\ref{as:query} gives $\Pr(F_t) \le \alpha$, and the union bound and linearity of expectation give the two claims.
\end{proof}
\begin{corollary}[unsafe flight events and keepout entries]\label{cor:flight}
For a flight scored at samples $t = 1, \dots, T$ let $C_T = \{\exists t: X_t \in U\}$, $E_T = \{\exists t: X_t \in K_\alpha(t)\}$ and $F_t = \{X_t \in U,\ s_t(X_t) > \tau_b\}$. Then $C_T \subseteq E_T \cup \bigcup_{t=1}^{T} F_t$, hence $\Pr(C_T) \le \Pr(E_T) + \Pr(\bigcup_{t} F_t)$ for every joint law of the flight.
\end{corollary}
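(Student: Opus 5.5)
The corollary follows from the pointwise inclusion in Lemma~\ref{lem:contact}(i), applied at every scored sample and then taken over the union. Independence, exchangeability across samples, and Assumption~\ref{as:query} are not needed, which is why the claim holds for every joint law of the flight. The plan is to prove a deterministic set inclusion sample by sample, lift it to the flight events by taking unions over $t$, and finish with monotonicity and subadditivity of probability.

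\textbf{Pointwise step.} Fix a sample $t$ and an outcome in $\{X_t \in U\}$. Write $b = b(X_t)$ for the stratum of the executed cell at that sample.

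If $b$ is a seen stratum, Lemma~\ref{lem:contact}(i) with $x = X_t$ gives $\{X_t \in U\} \subseteq \{X_t \in K_\alpha(t)\} \cup \{z_t(X_t) > \lambda_b\}$. Since $(s_t,\tau_b) = (z_t,\lambda_b)$ for a seen stratum, intersecting with $\{X_t \in U\}$ turns the second event into $F_t$.

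If $b$ is the unseen stratum, the lemma states that the keepout rule is itself the covered event $s_t(X_t) = m_{U,t}(X_t) \le d^*_b$. So an unsafe cell outside $K_\alpha(t)$ must have $s_t(X_t) > \tau_b$, and again lies in $F_t$.

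In both cases $\{X_t \in U\} \subseteq \{X_t \in K_\alpha(t)\} \cup F_t$ holds as an inclusion of events, outcome by outcome. The one point to state explicitly is that the stratum index $b$ inside $F_t$ is the random stratum $b(X_t)$, so $F_t$ is a well-defined event.

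\textbf{Lifting over the flight.} Taking the union over $t = 1,\dots,T$ gives
\[
C_T = \bigcup_t \{X_t \in U\} \subseteq \bigcup_t \{X_t \in K_\alpha(t)\} \cup \bigcup_t F_t = E_T \cup \bigcup_t F_t .
\]
Monotonicity of probability and the two-set union bound then give $\Pr(C_T) \le \Pr(E_T) + \Pr(\bigcup_t F_t)$. The union of the $F_t$ is kept intact and not bounded further, so no dependence structure among samples is used. This is what makes the statement valid for every joint law.

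\textbf{Main obstacle.} The only delicate step is the unseen-stratum case. There the keepout membership must coincide exactly with the covered event $s_t \le \tau_b$ at the threshold $d^*_b$, including the boundary: a tie counts as covered, consistent with the non-strict inequality in~\eqref{eq:keepout}.

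I would check this directly from the definition of the unseen rule in Section~\ref{sec:problem}-B rather than re-deriving it. A related point is the measurability of $K_\alpha(t)$ given the random map at sample $t$. I would take this as implicit in the paper's setup, just as it is in Lemma~\ref{lem:contact}(ii).
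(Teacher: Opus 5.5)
Your proposal is correct and follows essentially the same route as the paper's proof. Both apply the per-sample inclusion of Lemma~\ref{lem:contact}(i), covering the seen and unseen strata, take the union over $t = 1, \dots, T$, and conclude by monotonicity and subadditivity of probability, with no dependence structure across samples and no use of Assumption~\ref{as:query}.
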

\begin{proof}
Taking the union of the inclusions of Lemma~\ref{lem:contact}(i) at every $t$ and applying monotonicity and subadditivity of probability gives the result.
\end{proof}
A number for the second term needs per-sample coverage at $\alpha/T$ in the manner of Lindemann et al.~\cite{lindemann2023safe}, which the present calibration at $\alpha = 0.10$ does not provide, and in the replayed closed loop of Section~\ref{sec:eval} $C_T \subseteq E_T$ holds by construction.
Membership is evaluated at cell resolution, a centre lies within $\Delta\sqrt{3}/2$ of its cell centre and $\varphi$ is $1$-Lipschitz, therefore~\eqref{eq:bridge} implies body clearance at the exact position only when its middle term exceeds $r_{\mathrm{safe}}$ by $\Delta\sqrt{3}/2$, $0.087$~m here and not added to the margin, the same scope applying to part (ii) and Corollary~\ref{cor:flight}.

The probability in~\eqref{eq:coverage} is marginal, averaging the calibration cells and a fresh test cell of the calibration population, and because cells of one recording are correlated in space and time the held-out coverage of Section~\ref{sec:results} is computed blockwise, evidence about the certificate and not a proof of Assumption~\ref{as:exch}.
Assumption~\ref{as:query} asks the same event at the cell the vehicle occupies and does not follow from~\eqref{eq:coverage}, because the planner selects $X_t$ from the perceived map whose error the score measures, a gap Lindemann et al.\ close by assumption and Dixit et al.\ by a fixed sampled state set~\cite{dixit2024pwc}, neither established here.
A lower keepout-entry rate tightens the per-sample risk bound of Lemma~\ref{lem:contact}(ii), but does not itself establish a reduction in realised unsafe occupancy.

\section{Compliant Planning Inside the Keepout}\label{sec:method}
The planning layer, certified planning--monitor consistency (CPMC), receives the keepout $K_\alpha(t)$ in force at time $t$ and enforces one braking-distance inequality on it at plan time and, in a tightened form, at every command, on an unchanged planner~\cite{zhou2019fastplanner} (Fig.~\ref{fig:c3}). %
\textit{Problem.} Given $K_\alpha(t)$, a start and a goal, choose the reference and the command such that at every sample $t$ the executed centre satisfies $X_t \notin K_\alpha(t)$, the constraint, and the goal is reached within the mission time, the objective, while $K_\alpha(t)$ is never changed.

\subsection{The planner and its seam to the certificate}
The base planner searches a kinodynamic path on motion primitives, optimises a B-spline for smoothness and clearance and re-times it to the velocity and acceleration limits~\cite{zhou2019fastplanner}, and the keepout enters it at a single seam, the occupancy query of the search and the optimisation, which returns true for every cell of $K_\alpha(t)$.
Around this seam the method keeps a route search restricted to the free space $D(t) = W \setminus K_\alpha(t)$, a corridor around the route, a mission fence and a reference path along the middle of $D(t)$, without the stock planner's clearance term and reactive collision monitor. %

\subsection{One admissibility inequality}
Let $p$ and $v$ be the position and velocity of the tracked reference and $m_{\mathrm{tr}}$ the tracking margin, a sample quantile of this vehicle's measured distance between executed centre and reference, distinct from the map estimate $m(x)$.
For every cell $i$ of $K_\alpha(t)$ within the reach radius $R = m_{\mathrm{tr}} + v_{\max}^2 / (2 a_b) + a_b v_{\max} / (2 j_{\max}) + \Delta/2$, $j_{\max}$ being the command rate limit of Section~\ref{sec:method}-C, let $d_i(p, t)$ be the exact distance from $p$ to the cell's box, negative inside it, $g_i$ the unit vector from the box toward $p$ and $v_{n,i} = \max(0, -g_i^{\top} v)$ the closing speed. The admissibility margin is %
\begin{equation}\label{eq:admissibility}
h_i(p, v, t) = d_i(p, t) - m_{\mathrm{tr}} - \frac{v_{n,i}^2}{2 a_b},
\end{equation}
its minimum being $H(p, v, t) = \min_i h_i(p, v, t)$ and a reference state admissible when $H \ge 0$. %
\begin{figure}[t]\centering
\resizebox{\columnwidth}{!}{%
\begin{tikzpicture}[font=\large, >=Latex, lbl/.style={font=\normalsize}]
  \begin{scope}
    \fill[blue!25] (0,0) rectangle (0.8,0.8); \draw[blue!60!black] (0,0) rectangle (0.8,0.8);
    \node[lbl, blue!60!black, align=center] at (0.4,0.4) {cell $i$\\$\in K_\alpha(t)$};
    \draw[blue!50, dashed, rounded corners=2pt] (-0.3,-0.3) rectangle (1.1,1.1);
    \node[lbl, blue!60!black, anchor=north west] at (-0.3,-0.35) {shell of width $m_{\mathrm{tr}}$};
    \coordinate (p) at (3.0,2.1); \fill (p) circle (1.8pt); \node[lbl, anchor=south west] at ($(p)+(0.05,0.05)$) {reference state $(p, v)$};
    \coordinate (q) at (0.8,0.8);
    \draw[<->] (q) -- node[lbl, above, sloped] {$d_i$} (p);
    \draw[->, thick] (p) -- ++(-1.1,-0.33) node[lbl, below] {$v$};
    \draw[->, blue!60!black, thick] (q) ++(0.15,0.15) -- ++(0.45,0.26) node[lbl, right] {$g_i$};
    \draw[->, red!70!black, thick] (p) -- ++($-0.65*(0.866,0.5)$) node[lbl, left] {$v_{n,i}$};
    \draw[red!70!black, dashed] (p) circle (1.3); \node[lbl, red!70!black, anchor=west] at ($(p)+(0.9,0.95)$) {reach $R$};
    \node[lbl, align=left, anchor=north west] at (-0.3,-0.9) {$h_i = d_i - m_{\mathrm{tr}} - v_{n,i}^2/(2a_b) \ge 0$,\\braking at $a_b$ stops the reference before the shell};
  \end{scope}
  \begin{scope}[xshift=6.6cm, yshift=1.0cm]
    \draw[->] (0,0) -- (7.4,0) node[lbl, below] {time};
    \foreach \x in {0.3,1.7,3.1,4.5,5.9,7.3} {\draw[blue!60!black, thick] (\x,-0.12) -- (\x,0.12);}
    \node[lbl, blue!60!black, anchor=north east] at (6.6,-0.14) {certificate epochs $K_\alpha(t)$};
    \draw[orange!80!black, thick] (0.8,0) -- (0.8,-0.5);
    \node[lbl, orange!80!black, align=left, anchor=north west] at (0.0,-0.55) {plan accepted at $t_p$,\\$H \ge 0$ on every sample\\against $K_\alpha(t_p)$};
    \foreach \x in {0.9,1.0,...,7.3} {\draw[orange!60!black] (\x,-0.03) -- (\x,0.03);}
    \node[lbl, orange!80!black, align=left, anchor=north west] at (3.9,-0.55) {command filter, every $\Delta t$,\\$h^{F}_{i,k+1} \ge (1-\alpha_{\mathrm{cbf}}\Delta t)\,h^{F}_{i,k}$};
    \draw[decorate, decoration={brace, amplitude=3pt}] (1.7,0.5) -- (3.1,0.5) node[lbl, midway, above=2pt, align=center] {invariant between\\updates, Prop.~\ref{prop:filter}(i)};
    \draw[red!70!black, thick] (5.3,0) -- (5.3,0.5); \node[lbl, red!70!black, align=left, anchor=south west] at (5.35,0.5) {landing after an advance,\\recovery, counted};
  \end{scope}
\end{tikzpicture}}
\caption{The planning layer. Left, one keepout cell, $d_i$, $g_i$, $v_{n,i}$, reach $R$, shell of width $m_{\mathrm{tr}}$. Right, the two enforcement points on a timeline.}\label{fig:c3} %
\end{figure}
The constant $a_b$ is the planner's acceleration limit. A reference closing on cell $i$ at speed $v_{n,i}$ and decelerating at $a_b$ along the closing direction stops at distance $d_i - v_{n,i}^2/(2 a_b)$ from the cell. Thus $h_i \ge 0$ is necessary and sufficient for such a stop to end outside the shell of width $m_{\mathrm{tr}}$ around the keepout, and no second buffer is added at either enforcement point.
The form is the lane-keeping barrier of Ames et al.~\cite{ames2019cbf,ames2017cbfqp} composed on grid cells~\cite{harms2025composite,raja2024ogmcbf}, here protecting a time-varying conformal keepout. %

\subsection{Enforcement at plan time and at the command}
Each candidate of the planner is a reference path in $D$, a geometric pass moves the control points of any sample within $m_{\mathrm{tr}}$ of the keepout onto the reference route~\cite{chen2021fastrack} and refuses a candidate that cannot be moved clear, and the time profile is the fastest for which $H \ge 0$ holds at every sample against the keepout in force at planning time, built by a speed ceiling, a backward pass decelerating only toward cells ahead and a forward pass bounding the acceleration.
A candidate is accepted when $H \ge 0$ holds on all its samples at intervals of $\Delta t_{\mathrm{acc}}$ and re-timed rather than refused when it fails, a reference whose ceiling is zero somewhere has no route, a candidate that exhausts a cycle's re-timing budget is refused unless $H \ge 0$ already holds, and the planner plans from the state of the command filter below, not from the odometry. %
Because the keepout follows the map updates and the vehicle does not track its reference exactly, the second enforcement point filters the acceleration command at a fixed period $\Delta t$ between the planner and the autopilot.
The filter integrates a virtual reference state $(p_f, v_f)$ as a double integrator driven by the applied acceleration $a_f$, constant during each period, and forms a nominal acceleration
\begin{equation}\label{eq:nominal}
a_{\mathrm{nom}} = a_{\mathrm{cmd}} + k_p\, \mathrm{sat}(p_{\mathrm{cmd}} - p_f, e_p) + k_d\, \mathrm{sat}(v_{\mathrm{cmd}} - v_f, e_v),
\end{equation}
whose saturations bound the pull of the virtual state toward the plan.
At the command a cell's margin carries one more term,
\begin{equation}\label{eq:ramp}
h_i^{F} = h_i - \frac{a_b}{2 j_{\max}}\, v_{n,i},
\end{equation}
the distance covered while the command ramps from a non-braking value to full braking at the rate limit $j_{\max}$ per period, and $H^{F} = \min_i h_i^{F}$.
The filtered acceleration $a_f$ is the closest acceleration to $a_{\mathrm{nom}}$ that satisfies, for every cell within reach whose margin lies below an activation band $h_{\mathrm{band}}$, the one-step condition
\begin{equation}\label{eq:sampled}
h^{F}_{i,k+1} \;\ge\; (1 - \alpha_{\mathrm{cbf}}\, \Delta t)\, h^{F}_{i,k},
\end{equation}
where $h^{F}_{i,k+1}$ is predicted one period ahead for the applied acceleration and the current normal, inside the rate box $|a_f - a_{f,\mathrm{prev}}|_\infty \le j_{\max}\Delta t$ and the clip $|a_f| \le a_b$, a sequential closed-form projection on the most violated constraint in at most eight passes that modifies the applied reference and not the plan~\cite{garone2017governor}, and a period whose final command still violates a constraint is counted infeasible and replaced by maximal braking along $v_f$.
Three counted events interrupt the construction, a landing, some $h_i^{F} < 0$, starting a recovery push away from every violated cell until $H^{F} \ge 0$, a re-synchronisation, resetting the virtual state to a later admissible command, and a re-anchoring, resetting it to the odometry when the two drift apart beyond a bound.

\subsection{Compliance of the reference and the executed centre}
Geometric clearance alone does not ensure that a moving reference can stop before the keepout. CPMC enforces a consistent braking condition during trajectory acceptance and command filtering, addressing this mismatch as a mechanism for reducing keepout entries.
Where Lindemann et al.\ obtain feasibility of the constrained optimisation at every step, which places the modelled state outside the robustified constraint by construction, the requirement here splits into feasibility of the acceptance and of the filter, invariance of the filtered reference between certificate updates, tracking within the margin and compliance of the executed centre, the two statements below covering the planning links while tracking and compliance are measured, never assumed. %
\begin{proposition}[the command filter]\label{prop:filter}
\textup{(i) Invariance between updates.} Consider an interval between two certificate updates that contains no landing, re-synchronisation or re-anchoring, and assume \textup{(B1)} $H^{F}(p_f, v_f) \ge 0$ at its start, \textup{(B2)} no period of the interval is counted infeasible, \textup{(B3)} $0 < \alpha_{\mathrm{cbf}}\Delta t \le 1$, and \textup{(B4)} the one-period prediction in~\eqref{eq:sampled}, evaluated at the realised period, does not exceed the true value one period later. Then $h_i^{F} \ge 0$ for every cell of $K_\alpha$ at every period of the interval, and the reference never enters the keepout in it.
\textup{(ii) Compatibility away from the margin.} Let a sample of a delivered plan satisfy, for a cell $i$ ahead, $v_{n,i}^2 \le 2 a_{\mathrm{dec}}\,(d_i - m_{\mathrm{tr}})$ at a deceleration $a_{\mathrm{dec}} < a_b$, and write $\kappa = 1 - a_{\mathrm{dec}}/a_b$ and $r = a_b/(2 j_{\max})$. Then $h_i \ge \kappa\,(d_i - m_{\mathrm{tr}})$, and $h_i^{F} \ge 0$ whenever $d_i - m_{\mathrm{tr}} \ge 2 a_{\mathrm{dec}} r^2 / \kappa^2$. Moreover, if the nominal acceleration $a_{\mathrm{nom}}$ satisfies~\eqref{eq:sampled} for every active cell and lies inside the rate box and the clip, the projection returns it unchanged.
\end{proposition}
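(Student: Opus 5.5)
Part (i) is an induction over the periods of the interval, and part (ii) is direct algebra on \eqref{eq:admissibility} and \eqref{eq:ramp} followed by a fixed-point remark about the projection.

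\textbf{Part (i).} The induction hypothesis at period $k$ is $h^{F}_{i,k}\ge 0$ for every cell $i$ of the fixed keepout $K_\alpha$ of the interval. The base case is (B1). For the step, split the cells into two groups.

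\emph{Active cells.} These are cells within reach whose margin lies below $h_{\mathrm{band}}$. By (B2) the applied $a_f$ satisfies \eqref{eq:sampled}. By (B4) the true $h^{F}_{i,k+1}$ is at least its prediction. By (B3) the factor $1-\alpha_{\mathrm{cbf}}\Delta t$ lies in $[0,1)$. Together these give $h^{F}_{i,k+1}\ge(1-\alpha_{\mathrm{cbf}}\Delta t)h^{F}_{i,k}\ge 0$.

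\emph{Inactive cells.} These have $h^{F}_{i,k}\ge h_{\mathrm{band}}$ or lie outside the reach radius $R$. I would argue that one period cannot drive such a margin negative. The position moves by at most $v_{\max}\Delta t+a_b\Delta t^2/2$. The closing speed changes by at most $a_b\Delta t$. So $h^{F}$ drops by at most a computable amount, and $h_{\mathrm{band}}$ is assumed to exceed it. For cells beyond $R$: $d_i>R$ already dominates $m_{\mathrm{tr}}+v_{n,i}^2/(2a_b)+a_bv_{n,i}/(2j_{\max})$ when $v\le v_{\max}$, with the $\Delta/2$ slack absorbing one period of motion.

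\emph{Main obstacle.} This inactive-cell step is where I expect trouble, because the statement does not list a band-size or period-size hypothesis. I would first try to read it as implicit in the choice of $R$ and $h_{\mathrm{band}}$, and otherwise add it as an explicit condition. Once $h^{F}_i\ge0$ holds, \eqref{eq:ramp} and \eqref{eq:admissibility} give $d_i\ge m_{\mathrm{tr}}\ge0$, so $p_f$ lies outside every cell box and the reference never enters the keepout. Because the keepout does not change inside the interval, no cell appears with a negative margin midway, and this is the only use of the no-update hypothesis. The absence of landing, re-synchronisation and re-anchoring guarantees that $(p_f,v_f)$ evolves as the double integrator assumed in \eqref{eq:sampled}.

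\textbf{Part (ii).} Write $D=d_i-m_{\mathrm{tr}}\ge 0$. Then $v_{n,i}^2/(2a_b)\le (a_{\mathrm{dec}}/a_b)D$, which gives $h_i\ge D-(a_{\mathrm{dec}}/a_b)D=\kappa D$. For the filter margin, $h_i^{F}\ge \kappa D - r\,v_{n,i}$ with $v_{n,i}\le\sqrt{2a_{\mathrm{dec}}D}$. So it suffices that $\kappa D\ge r\sqrt{2a_{\mathrm{dec}}D}$, that is $\sqrt D\ge r\sqrt{2a_{\mathrm{dec}}}/\kappa$. Squaring gives exactly $D\ge 2a_{\mathrm{dec}}r^2/\kappa^2$.

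For the final claim, observe that the sequential projection starts at $a_{\mathrm{nom}}$ and only moves onto a violated constraint. If $a_{\mathrm{nom}}$ satisfies every active instance of \eqref{eq:sampled} together with the rate box and the clip, no constraint is violated in the first pass, and it returns $a_{\mathrm{nom}}$ unchanged. Equivalently, $a_{\mathrm{nom}}$ is the closest feasible point to itself.
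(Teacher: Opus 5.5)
Your proposal is correct and follows the paper's proof essentially step for step. In (i) you use the same split, active cells handled by (B2), (B4) and (B3) and inactive cells by a one-period drop bound, with induction over the periods; in (ii) you use the same bound $h_i^{F} \ge \kappa s - r\sqrt{2 a_{\mathrm{dec}} s}$ at $s = d_i - m_{\mathrm{tr}}$ and the same fixed-point remark. The inactive-cell step you flagged is settled in the paper as you anticipate, without an extra hypothesis: it checks that the one-period change $2 v_{\max}\Delta t + a_b \Delta t^2 + a_b^2 \Delta t/(2 j_{\max})$, which is what your accounting of position and closing-speed change gives, is about $0.031$~m at the stated constants, below $h_{\mathrm{band}} = 0.05$~m.
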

\begin{proof}
(i) For an active cell (B2) and (B4) give $h^{F}_{i,k+1} \ge (1 - \alpha_{\mathrm{cbf}}\Delta t)\, h^{F}_{i,k}$ and (B3) keeps a non-negative value non-negative~\cite{agrawal2017dcbf}, a cell outside the active set has $h_i^{F} > h_{\mathrm{band}}$ or lies beyond the reach and one period changes $h_i^{F}$ by at most $2 v_{\max}\Delta t + a_b \Delta t^2 + a_b^2 \Delta t / (2 j_{\max})$, below $h_{\mathrm{band}}$ at the nominal period of $0.01$\,s and the values of Section~\ref{sec:setup}, and induction along the periods gives the claim, entry into a cell needing $d_i \le 0$, hence $h_i^{F} < 0$. %
(ii) The bound gives $h_i \ge \kappa (d_i - m_{\mathrm{tr}})$ in~\eqref{eq:admissibility} and $h_i^{F} \ge \kappa s - r\sqrt{2 a_{\mathrm{dec}} s}$ at $s = d_i - m_{\mathrm{tr}}$ in~\eqref{eq:ramp}, non-negative exactly when $\sqrt{s} \ge r\sqrt{2 a_{\mathrm{dec}}}/\kappa$, and a nominal command that violates no constraint and lies inside the rate box and the clip is a fixed point of the projection.
\end{proof}
Assumption (B2) is not a theorem, the projection can be infeasible when the previous command is far from braking at a margin near zero, the count of infeasible periods is its measured counterpart, and the proposition is sampled~\cite{breeden2022sampled}, compliance being reported at the checker's rate.
Part (ii) applies wherever the re-timing pass limits the delivered profile at $a_{\mathrm{dec}} < a_b$, its threshold being $0.02$~m at the values of Section~\ref{sec:setup}, whereas the decay condition of~\eqref{eq:sampled} is stronger than $h^{F} \ge 0$ and not implied by part (ii), its continuous form holding along the same profile only beyond about $0.6$~m from the margin, therefore the filter is expected to change the command late in an approach to a cell ahead, the share of changed periods being the measured counterpart. %
The vehicle follows the reference at an along-track lag $\tau$ and the tracking residual is $e_\tau(t) = |x(t) - p_f(t - \tau)|$, a distance function is $1$-Lipschitz, therefore $\mathrm{dist}(x(t), K_t) \ge \mathrm{dist}(p_f(t - \tau), K_t) - e_\tau(t)$, and when the keepout did not advance in $[t - \tau, t]$ and $H^{F} \ge 0$ held at $t - \tau$ the executed centre is outside the keepout whenever $e_\tau(t) < m_{\mathrm{tr}}$, an advance inside the lag window being counted as an entry when it covers the lagging vehicle.
The margin is a quantile, not a bound, and the rate of samples at which $e_\tau(t) \ge m_{\mathrm{tr}}$ is reported beside the lag. %
\subsection{Admissible motion}
\begin{lemma}[directional admissibility along a corridor]\label{lem:gaps}
In a straight corridor the wall cells abeam have normals orthogonal to the velocity, $v_{n,i} = 0$ for every such cell and the condition $H \ge 0$ on them reduces to $d_i \ge m_{\mathrm{tr}}$ at every speed up to $v_{\max}$.
A plan-time law that admits a sample only when the nearest-cell clearance $\rho$ satisfies $\rho \ge c(v) + v\,t_r + v^2/(2a)$, for a reaction time $t_r \ge 0$, a deceleration $a \le a_b$ and a tube $c(v) \ge m_{\mathrm{tr}}$, admits in any geometry a subset of the samples at which $H \ge 0$, and in the corridor every sample at which $m_{\mathrm{tr}} \le d_i < c(v) + v\,t_r + v^2/(2a)$ is admitted by the condition on the abeam cells alone.
\end{lemma}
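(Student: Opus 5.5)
The proof goes directly from the definitions of $h_i$ in~\eqref{eq:admissibility} and $v_{n,i} = \max(0, -g_i^{\top} v)$. It has three parts, taken in the order of the statement.

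\emph{Part 1: abeam cells.} In a straight corridor with walls parallel to the velocity, the box of an abeam cell lies across from $p$ along a direction orthogonal to the corridor axis. The unit vector $g_i$ from the box toward $p$ is therefore orthogonal to $v$, so $g_i^{\top} v = 0$ and $v_{n,i} = 0$ at every speed up to $v_{\max}$. Substituting into~\eqref{eq:admissibility} gives $h_i = d_i - m_{\mathrm{tr}}$, and $h_i \ge 0$ becomes $d_i \ge m_{\mathrm{tr}}$. The one point to check is what "abeam" means for a box rather than a point. I will read it as saying that the nearest point of the box to $p$ lies in the plane through $p$ normal to $v$. This is where the exact box distance matters.

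\emph{Part 2: the law is a subset in any geometry.} Suppose a sample satisfies $\rho \ge c(v) + v t_r + v^2/(2a)$. Take any cell $i$ within reach. Then $d_i \ge \rho$, since $\rho$ is the nearest-cell clearance. We also have $v_{n,i} \le |v| = v$, because $g_i$ is a unit vector, and $a \le a_b$ gives $v_{n,i}^2/(2a_b) \le v^2/(2a)$. Together with $c(v) \ge m_{\mathrm{tr}}$ and $v t_r \ge 0$, these give
\[
h_i \ge \rho - m_{\mathrm{tr}} - \frac{v^2}{2a} \ge c(v) - m_{\mathrm{tr}} + v t_r \ge 0 .
\]
Taking the minimum over $i$ gives $H \ge 0$. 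I will state the convention that $\rho$ is measured to the same keepout cells with the same exact box distance; otherwise $d_i \ge \rho$ needs a short remark. Cells outside the reach radius $R$ do not enter $H$ and cause no difficulty.

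\emph{Part 3: the corridor gap.} Consider a corridor sample with $m_{\mathrm{tr}} \le d_i < c(v) + v t_r + v^2/(2a)$, where $d_i$ is the abeam clearance. By Part 1, the abeam cells contribute $h_i = d_i - m_{\mathrm{tr}} \ge 0$, so the abeam condition admits the sample. The speed-dependent law refuses it whenever $\rho = d_i$, which holds if the nearest cell is abeam. The main obstacle is reading "admitted by the condition on the abeam cells alone" correctly. The claim concerns only the abeam contribution to $H$, not cells ahead, which may still have $v_{n,i} > 0$. I will phrase the conclusion that way: on such samples the abeam constraint is slack while the scalar law refuses, so the inclusion of Part 2 is strict whenever this gap is nonempty and no cell ahead binds.
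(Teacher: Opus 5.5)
Your proposal is correct and follows the paper's proof step for step: you substitute $v_{n,i}=0$ into~\eqref{eq:admissibility} for the abeam cells, you get the inclusion from the chain $d_i \ge \rho$, $v_{n,i} \le v$, $a \le a_b$, $c(v) \ge m_{\mathrm{tr}}$, $t_r \ge 0$, and you note that for wall-parallel motion $\rho$ equals the lateral $d_i$, so the band is admitted by the abeam condition alone. One wording point in Part~3: at $d_i = m_{\mathrm{tr}}$ the abeam constraint is active ($h_i = 0$) rather than slack, so ``satisfied'' is the accurate word there.
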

\begin{proof}
Substitution of $v_{n,i} = 0$ in~\eqref{eq:admissibility} gives the first claim. For every cell $d_i \ge \rho$ and $v_{n,i} \le v$, hence $h_i \ge \rho - m_{\mathrm{tr}} - v^2/(2a_b) \ge 0$ whenever $\rho \ge c(v) + v t_r + v^2/(2a)$, $t_r \ge 0$, $a \le a_b$ and $c(v) \ge m_{\mathrm{tr}}$, the inclusion, and for wall-parallel motion the nearest-cell clearance is the lateral distance $d_i$, where the remaining band is admitted by the first claim only.
\end{proof}
The admissible-set inclusion concerns the stated isotropic law, and the experimental law-only arm departs from its hypotheses by the braking constant and creep rule of Section~\ref{sec:setup}-B. The inclusion holds generally, whereas the corridor-band example applies only to abeam cells. By feasible-set monotonicity, the directional condition cannot worsen the optimal objective of an ideal planner on the same keepout relative to the stated law. The deployment pattern's soft clearance and reactive stop spend clearance the certificate has already budgeted, a mechanism of the measured reach difference. %
Invariance of the reference and admissibility of more motions do not establish arrival, because a vehicle that stops remains compliant, the motion of the certificate between updates is only counted, and none of these statements orders two planning stacks, the comparison on the same cells being the registered experiment of Section~\ref{sec:results}.

\section{Evaluation}\label{sec:eval}\label{sec:setup}\label{sec:results}

\subsection{Certificate validation on recorded data}
The certificate is validated on an indoor smoke dataset recorded by a tracked ground robot carrying a 16-ring LiDAR and a marker-based ground-truth system~\cite{upo_nix}.
Bags 1 to 5 are split in severity-stratified blocks of 10~s per bag, every test block apart in time from the calibration blocks, and the class edges are fixed on the calibration blocks alone.
Coverage of the certified event is reported per severity class against the bound~\eqref{eq:coverage}, its intervals being 60\,s cluster bootstrap intervals at 95\,\%. %
No claim of this track crosses to the closed loop.

On the held-out recordings the per-class certificate covered the certified event on 0.900 [0.885, 0.913] of unsafe cells pooled (Fig.~\ref{fig:c1}(c)), the interval of every class containing the target, where one pooled margin covered the foggiest quartile below the target and the clearest above it. %
Table~\ref{tab:cert} reports the leave-one-bag-out folds, keepout shares, pooled-margin and uncalibrated 0.3~m buffer baselines, and simulator-calibrated certificate on held-out episodes. %
\begin{table}[t]
\caption{Certificate rows not in Fig.~\ref{fig:c1}(c). Set membership, the unsafe cell inside the keepout, scored on the same held-out blocks.}
\label{tab:cert} %
\centering\scriptsize\setlength{\tabcolsep}{2.3pt}%
\begin{tabular}{@{}lll@{}}
\toprule
Row & Coverage [95\,\%] & Keepout share \\ %
\midrule
per-class, certified event, pooled & 0.900 [0.885, 0.913] & 0.766 [0.746, 0.785] \\
pooled margin, certified event, foggiest & 0.873 [0.826, 0.915] & 0.778 \\
pooled margin, certified event, clearest & 0.913 [0.894, 0.933] & 0.778 \\
leave-one-bag-out, foggiest & 0.888 [0.850, 0.925] & -- \\
leave-one-bag-out, other quartiles & 0.900 to 0.902 & -- \\
per-class, set membership & 0.918 [0.904, 0.930] & 0.766 [0.746, 0.785] \\
buffer 0.3\,m (arm S~0.3), set membership & 0.565 [0.531, 0.596] & 0.264 [0.245, 0.282] \\
sim.-calibrated, held-out tunnel & 0.964 [0.958, 0.970] & 0.734 \\
sim.-calibrated, held-out caves & 0.877 [0.865, 0.889] & 0.742 \\
\bottomrule
\end{tabular}
\end{table}

\subsection{Closed-loop evaluation}
\textit{Apparatus and certificate.} The planning layer is evaluated in closed loop on a quadrotor model in a simulated cave built on FlightBench~\cite{flightbench}.
The simulated LiDAR reproduces the recorded sensor's ring geometry, and a fog model fitted to recorded smoke sweeps produces degradation~\cite{rasshofer2011weather,hahner2021fog}, no overall fidelity being claimed. %
The certificate in force is a replayed process, the union of a static keepout, namely a ladder shell of fixed certified clearance built from ground-truth route geometry, and an advance sequence fitted to certificate fields generated by the frozen perception pipeline on calibration flights. The fog and LiDAR models therefore affect the results only by way of those flights.
The advances reflect sensing conditions and mapping representation alike.
The sequence at each pose and replicate is identical for every arm and independent of their actions, and no live mapping or margin selection runs during flights.
The certificate is fitted from two calibration sources, simulator-calibrated margins from simulated cave LiDAR against simulator ground truth and real-data-transferred margins from the recordings of Section~\ref{sec:eval}-A at the same class edges.
The calibration source is the only variable between regimes, and neither claims coverage in the physical world.
By construction, the unsafe set of the route, the cells within the body radius of a true wall, lies inside the static keepout of either setting by at least 0.465\,m, and the advance process only adds cells, therefore the keepout in force contains that set at every epoch of every flight and every contact sample is an entry sample.

\begin{figure}[t]\centering
\includegraphics[width=0.75\columnwidth]{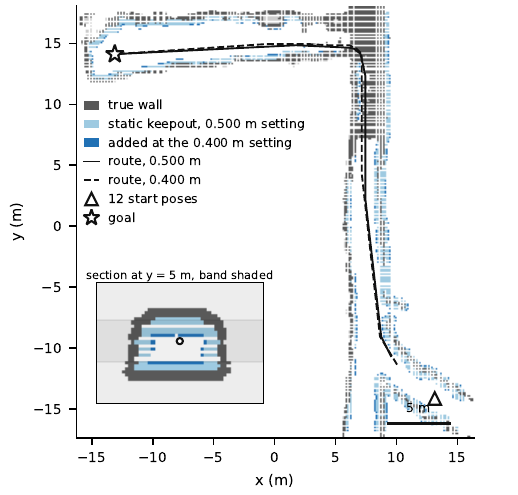} %
\caption{The route. Top, plan view, keepouts of both settings, reference routes, start poses, goal. Bottom, cross-section at $y = 5$\,m, flight band shaded, both shells.}\label{fig:cave} %
\end{figure}
\textit{Design and arms.} The route is a tunnel map (Fig.~\ref{fig:cave}) at two certified-clearance settings, 0.500 and 0.400~m, and two degradation levels, clear air and fog of extinction 0.72.
Each combination has 12 start poses within 0.15\,m of one another, flown in two replicates, giving 96 cells, each flown once by each of seven arms, 1344 flights in the two regimes.
Every arm shares the poses, seeds, goal and acceleration limit of 2.0~m/s$^2$, and all but $A'$ 1.10 the speed limit of 1.5~m/s. %
The constants are $\Delta t = 0.01$~s, $\alpha_{\mathrm{cbf}} = 2$~s$^{-1}$, $h_{\mathrm{band}} = 0.05$~m, $j_{\max} = 20$~m/s$^3$, $a_{\mathrm{dec}} = 1.0$~m/s$^2$, $k_p = k_d = 4$, $e_p = 0.25$~m, $e_v = 0.25$~m/s and $\Delta t_{\mathrm{acc}} = 0.01$~s. %
M, the proposed stack, is KEPT's planning layer. %
The law-only arm applies an isotropic plan-time law of the form of Lemma~\ref{lem:gaps} at a braking constant of 5~m/s$^2$, outside the lemma's hypothesis $a \le a_b$, plus a low-clearance creep rule permitting speeds of 0.10 to 0.30~m/s, and omits the command filter. The filter-only arm is the command filter alone in front of the $A'$ planner, and the two isolate what each adds. %
$A'$ 0.15 is the stock planner on the certified map at a soft clearance of 0.15~m and its own emergency stop, the conformal deployment pattern and the claim's comparator.
$A'$ 0.25 is a second point on its clearance curve, and $A'$ 1.10 caps its speed at 1.10~m/s, testing whether speed explains the result.
S 0.3 is the stock planner at a fixed inflation of 0.3~m and no certificate, assessing the certificate's contribution.
The comparator's clearance and monitor buffer were tuned on maps disjoint from the route and frozen before any flight.
A start filter fixed before any flight removes, for every arm alike, cells whose resting footprint intersects the certificate at the start.
It removed three fog cells at the 0.500 setting in the simulator-calibrated regime and none in the transferred regime, leaving 93 and 96 cells for paired analysis.
Each flight ends 5~s after reaching the goal or at 150~s. %

\textit{Outcomes.} Compliance is the executed centre staying out of the certificate in force at every sample, entries being attributed to the planner or to the certificate's motion, safety the executed centre staying out of the true unsafe set $U$, and reach the ground-truth position coming within 1.0~m of the goal within 150~s. %
The primary compliance count is the number of flights containing an entry, reported beside the entry episodes, their depth and duration.

\textit{Analysis.} The claim is decided per setting and level and pooled.
The pre-registered criterion is that M wins against $A'$ 0.15 when its entry flights and entry episodes are not more than those of $A'$ and its reached flights are strictly more, and a strict win, fewer on both entry counts, is reported beside it. %
Differences are paired by pose, intervals are 95\,\% pose-bootstrap intervals, and $p$ values two-sided sign-flip tests.
Pre-registration, run matrix, real-time validity rule, incident policy and host assignment were fixed and hashed before any evaluation flight. %

\subsection{Compliance and reach against the deployment pattern}
\begin{table*}[t]
\caption{Stack M against the deployment pattern $A'$ 0.15 on the matched cells of each regime, M~/~$A'$. EF entry flights, EP entry episodes, R goals reached, $\Delta$EF, $\Delta$R paired differences M minus $A'$, 95\,\% interval and $p$, Won the criterion of Section~\ref{sec:eval}-B.}
\label{tab:claim} %
\centering\footnotesize\setlength{\tabcolsep}{4pt}%
\begin{tabular}{llrcccllc}
\toprule
Regime & Stratum & $n$ & EF & EP & R & $\Delta$EF [95\,\%] $p$ & $\Delta$R [95\,\%] $p$ & Won \\
\midrule
Simulator-calibrated & clear 0.500 & 24 & 1~/~1 & 2~/~6 & 13~/~21 & $+0.00$ [-0.25, +0.25] 1.00 & $-0.67$ [-1.33, +0.08] 0.16 & no \\
 & clear 0.400 & 24 & 2~/~1 & 6~/~1 & 19~/~13 & $+0.08$ [-0.17, +0.33] 1.00 & $+0.50$ [-0.08, +1.08] 0.22 & no \\
 & fog 0.500 & 21 & 0~/~3 & 0~/~5 & 14~/~14 & $-0.25$ [-0.50, +0.00] 0.25 & $+0.00$ [-0.42, +0.42] 1.00 & no \\
 & fog 0.400 & 24 & 0~/~4 & 0~/~4 & 19~/~13 & $-0.33$ [-0.58, -0.08] 0.12 & $+0.50$ [+0.08, +0.92] 0.11 & yes \\
 & pooled & 93 & 3~/~9 & 8~/~16 & 65~/~61 & $-0.13$ [-0.25, +0.00] 0.15 & $+0.08$ [-0.19, +0.38] 0.70 & yes \\
\midrule
Transferred & clear 0.500 & 24 & 2~/~3 & 2~/~7 & 15~/~20 & $-0.08$ [-0.25, +0.00] 1.00 & $-0.42$ [-0.92, +0.08] 0.25 & no \\
 & clear 0.400 & 24 & 0~/~1 & 0~/~1 & 19~/~19 & $-0.08$ [-0.25, +0.00] 1.00 & $+0.00$ [-0.50, +0.42] 1.00 & no \\
 & fog 0.500 & 24 & 2~/~6 & 2~/~10 & 17~/~16 & $-0.33$ [-0.67, +0.00] 0.22 & $+0.08$ [-0.58, +0.67] 1.00 & yes \\
 & fog 0.400 & 24 & 1~/~8 & 1~/~8 & 16~/~10 & $-0.58$ [-1.00, -0.17] 0.06 & $+0.50$ [+0.00, +1.08] 0.19 & yes \\
 & pooled & 96 & 5~/~18 & 5~/~26 & 67~/~65 & $-0.27$ [-0.42, -0.13] 0.004 & $+0.04$ [-0.23, +0.31] 0.89 & yes \\
\bottomrule
\end{tabular}
\end{table*}

\begin{table}[t]
\caption{Pooled counts, M against every other arm, M~/~arm, columns as in Table~\ref{tab:claim}.}
\label{tab:comparators} %
\centering\footnotesize%
\resizebox{\columnwidth}{!}{%
\begin{tabular}{llcccc}
\toprule
Regime & Arm & EF & EP & R & Won \\
\midrule
Simulator-calibrated & deployment pattern $A'$ 0.15 & 3~/~9 & 8~/~16 & 65~/~61 & yes \\
 & $A'$ 0.25 & 3~/~10 & 8~/~20 & 65~/~66 & no \\
 & $A'$ 1.10 & 3~/~8 & 8~/~11 & 65~/~70 & no \\
 & law only & 3~/~2 & 8~/~2 & 65~/~63 & no \\
 & filter only & 3~/~5 & 8~/~7 & 65~/~65 & no \\
 & S 0.3 & 3~/~90 & 8~/~223 & 65~/~35 & yes \\
\midrule
Transferred & deployment pattern $A'$ 0.15 & 5~/~18 & 5~/~26 & 67~/~65 & yes \\
 & $A'$ 0.25 & 5~/~17 & 5~/~18 & 67~/~69 & no \\
 & $A'$ 1.10 & 5~/~9 & 5~/~10 & 67~/~65 & yes \\
 & law only & 5~/~5 & 5~/~7 & 67~/~63 & yes \\
 & filter only & 5~/~5 & 5~/~5 & 67~/~63 & yes \\
 & S 0.3 & 5~/~93 & 5~/~276 & 67~/~64 & yes \\
\bottomrule
\end{tabular}}
\end{table}

 \begin{figure}[t]\centering
\includegraphics[width=0.55\columnwidth]{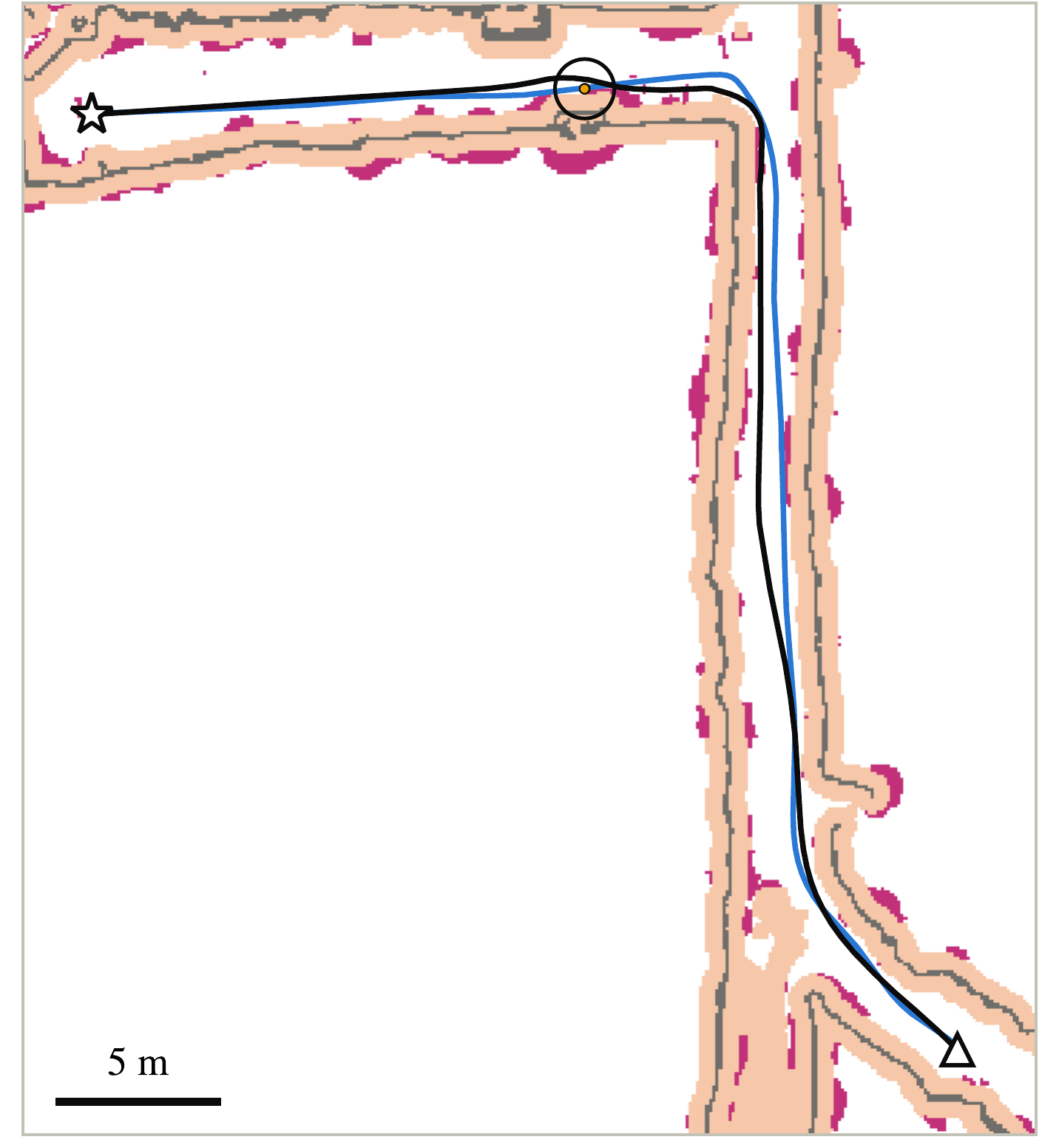} %
\caption{One selected cell, fog at the 0.400 setting, simulator-calibrated regime, where A$'$ 0.15 entered the keepout and M reached the goal without entry, not a typical flight. Grey, true walls. Light, static keepout. Saturated, advance cells in force in either flight. Black, M. Blue, A$'$ 0.15. Ringed dots, entries. Triangle, start. Star, goal.} %
\label{fig:f1map} %
\end{figure}
Table~\ref{tab:claim} reports entry flights, entry episodes and goals reached for the proposed stack and the deployment pattern on matched cells in each regime.
Fig.~\ref{fig:f1map} shows one example.
On the 189 pooled cells the stack entered in 8 flights against 27 and reached 132 goals against 126, 69.8 versus 66.7 percent, an observed gain of 3.2 percentage points, positive in both regimes, the per-regime intervals including zero.
The pre-registered criterion holds pooled in both regimes and in both fog 0.400 strata but fails in clear air at 0.500, where the stack reached 15 against 20 and 13 against 21 goals as explained in Section~\ref{sec:eval}-D. %
The six reversals are entries one or two map cells deep, 0.05 to 0.10\,m, lasting at most 0.93\,s, in cells where the deployment pattern made no entry.
Every stack entry stayed within this depth and duration, whereas 7 of the deployment pattern's 27 entry flights exceeded 0.10\,m and one left the mapped volume after wall contact.
The checker's fail-closed convention counts that flight's remaining 59\,s as inside the keepout, a scoring rule, not observed obstacle-cell penetration.
The three deployment-pattern arms account for all six reversals, four in the simulator-calibrated regime.

\subsection{Enforcement points, cost and failure cases}
The law-only and filter-only arms each recover the compliance gain, entering in 2 and 5 flights in the simulator-calibrated regime and 5 and 5 in the transferred regime, versus 3 and 5 for the full stack. %
The pair adds reach against both ablations in the transferred regime (67 against 63 and 63) but not against the filter-only arm in the simulator-calibrated regime (65 against 63 and 65).
The full stack therefore does not dominate both ablations on both measures in both regimes, and its claim rests on the comparison against the deployment pattern (Table~\ref{tab:comparators}).

The filter changed the command in 6.3 and 5.7 percent of its periods and found 5.4 and 4.6 percent infeasible, where maximal braking was applied and the binding barrier stayed non-negative in every case.
Certificate advances put the reference inside the margin in 151 and 211 landing episodes, 135 and 180 of which recovered by the push along the violated normals and 16 and 31 by re-synchronisation, and no flight ended in an unrecovered landing.
The aligned tracking residual, the distance to the reference 0.46\,s behind as the margin was fitted, exceeded the margin of 0.04\,m on 9.6 and 10.6 percent of samples, twice the share the margin's quantile implies, at a 95th percentile of 0.052 and 0.055\,m against 0.032\,m on the development flights. The same-time distance to the published reference, including the along-track lag, had a median of 0.40 and 0.39\,m while moving. %

Of 201 valid flights censored at 150\,s, 189 were stationary, median speed below 0.05\,m/s in the final 20\,s, and every censored comparator flight stalled. %
The remaining 12 were stack flights still moving toward the goal (4 transferred, 8 simulator-calibrated) at 0.14 to 0.68\,m/s, 1.5 to 22.5\,m away, each on an accepted plan less than 3.5\,s old, and seven were within 8.5\,m.
Their eventual arrival was not evaluated.
Another 15 censored stack flights, all in clear air, never obtained an accepted plan (the start-stall case described below).
These two planning-layer behaviours, rather than properties of the certificate itself, account for the stack's reach losses and make censoring asymmetric between arms.
At start poses whose route passes inside a keepout cell's tracking margin, the acceptance test refuses every candidate at rest, and in clear air the stall persists.
A slower vehicle meets more certificate advances, and an advance covered the stack in flight in cells the deployment pattern had already passed.

\section{Conclusion and Limitations}\label{sec:conclusion}
This paper selected conformal keepout margins per viewing direction from a label-free severity estimate.
Staying outside the keepout preserves clearance when the calibrated error bound holds.
Planning and control never altered these margins and let the keepout follow the perceived map.
Both enforced the same braking-distance condition.
Its directional form was shown to be no more restrictive than any isotropic braking law that brakes no harder and keeps at least the same margin.
In 189 paired simulated cases the stack entered the keepout in 8 flights against 27 for the usual deployment pattern and reached 132 goals against 126.
Fewer entries and more goals were observed in both calibration regimes.

The coverage guarantee holds per map cell at the moment a keepout is in force.
A bound for a whole flight does not follow from it alone.
To isolate the planning layer the closed loop replayed keepouts fitted to the proposed pipeline's certificates from calibration flights.
Margins calibrated on smoke recordings carry no coverage guarantee in simulation.
The simulator-calibrated regime is reported beside them for that reason.
The evidence is one route, one fog level beside clear air, one planner and one quadrotor model in simulation.
All comparators use the same underlying planner and map.
Filter-infeasibility events were recorded, but a feasible command was not guaranteed.
Future work should address two failure modes.
The vehicle can stall at the start when every candidate plan passes inside the keepout's tracking margin.
In flight a keepout update can cover a slowly moving vehicle.
\bibliographystyle{IEEEtran}
\bibliography{IEEEabrv,refs}

\end{document}